\newtheorem{proposition}{\textcolor{black}{Proposition}}
\newtheorem{lemma}{Lemma}
\newtheorem{theorem}{Theorem}
\theoremstyle{nonumberplain}
\newtheorem{proof}{Proof}
\newtheorem{Proof}{Proof}
\newcommand{\posgt}[1]{\mathbf{Y}_{#1}}
\newcommand{\posest}[1]{\hat{\mathbf{Y}}_{#1}}
\definecolor{newextractedpurple}{RGB}{127,0,127}
\def\BibTeX{{\rm B\kern-.05em{\sc i\kern-.025em b}\kern-.08em
    T\kern-.1667em\lower.7ex\hbox{E}\kern-.125emX}}
\begin{document}
\thispagestyle{empty}

\setlength{\columnsep}{0.25in}

\title{Task-Oriented Semantic Compression for Localization at the Network Edge}

\author{
    \IEEEauthorblockN{Zhengru Fang${}^{\star}$, Senkang Hu${}^{\star}$, Yu Guo${}^{\star}$, 
Yiqin Deng${}^{\star}$, Yuguang Fang${}^{\star}$}
    \IEEEauthorblockA{$^\star$Hong Kong JC STEM Lab of Smart City and Department of Computer Science,\\
    City University of Hong Kong, Hong Kong.\\
    Email: \{zhefang4-c, senkang.forest\}@my.cityu.edu.hk,\{yu.guo, yiqideng, my.fang\}@cityu.edu.hk}
    \vspace{-8mm}
}

\maketitle

\begin{abstract}
Achieving precise visual localization in GPS-limited urban environments poses significant challenges for resource-constrained mobile platforms, particularly under strict bandwidth, memory, and processing limitations. Inspired by mammalian spatial cognition, we propose a task-oriented communication framework in which bandwidth-limited endpoints equipped with multi-camera systems extract compact multi-view features and offload localization tasks to collaborative edge servers. We introduce the {\underline{O}}rthogonally-constrained {\underline{V}}ariational {\underline{I}}nformation {\underline{B}}ottleneck encoder (O-VIB), which incorporates automatic relevance determination (ARD) to prune non-informative features while enforcing orthogonality to minimize redundancy. This enables efficient and accurate localization with minimal transmission overhead. Extensive evaluation on a real-world urban localization dataset demonstrates that O-VIB achieves high-precision localization under stringent bandwidth budgets, outperforming existing methods across diverse communication constraints. 
\end{abstract}

\begin{IEEEkeywords}
Information bottleneck, edge inference, task-oriented communications.
\end{IEEEkeywords}

\section{Introduction}
The rapid proliferation of resource-constrained mobile platforms in applications such as last-mile parcel delivery, food and grocery logistics, emergency medical supply transport, time-sensitive pharmaceutical delivery, and urban infrastructure inspection demands reliable and precise positioning \cite{10536071,10937373}. While traditional radio-based positioning remains a common method, its susceptibility to signal degradation and multi-path interference significantly undermines reliability in dense urban canyon environments \cite{sathaye2022experimental}. Existing sensor-based alternatives, including inertial, magnetic, and acoustic systems, partially address these vulnerabilities yet suffer from calibration inaccuracies, limited precision, and environmental disruptions. Specialized signal processing techniques, though beneficial, introduce additional complexity and hardware demands, limiting their widespread practical deployment.

To address these challenges, researchers have begun to explore vision-based solutions that leverage multi-directional imagery for robust spatial awareness in urban environments with limited signal coverage --- a common condition in high-rise city centers where last-mile delivery and emergency logistics operations are most needed. For instance, recent work has demonstrated how fiducial markers can be incorporated into stereo visual-inertial odometry to achieve sub-meter accuracy under complex scenarios where conventional signal coverage is unreliable \cite{Wang2023UAV}. Likewise, collaborative visual-inertial simultaneous localization and mapping (SLAM) frameworks have emerged, showing that multiple mobile agents can share and fuse camera data to build dense 3D maps when navigating large-scale urban areas \cite{Zhang2022CVIDS}.

At the same time, the surge of edge computing offers new opportunities to offload computationally intensive tasks, particularly in situations where mobile endpoints lack the onboard resources to process large-scale imagery in real time \cite{fang2024pacp,10835069}. Rather than streaming raw or merely compressed video, task-oriented communication strategies focus on transmitting only the features most salient for specific tasks such as visual positioning \cite{Kang2022Task}. Such approaches effectively reduce bandwidth consumption and can adapt to changing network conditions. Similarly, learned feature compression techniques in split computing architectures have been shown to improve accuracy and reduce overhead, especially for tasks like object detection and classification \cite{Yuan2024Split}.

Motivated by these observations, we proactively outsource computation tasks \cite{10536071} while considering communication bottlenecks \cite{fang2025ton}, and propose an edge-assisted, multi-camera visual positioning system utilizing a novel {\underline{\textbf{O}}}rthogonally-constrained {\underline{\textbf{V}}}ariational {\underline{\textbf{I}}}nformation {\underline{\textbf{B}}}ottleneck encoder (O-VIB) enhanced with automatic relevance determination (ARD). O-VIB encourages the automatic collapse of irrelevant feature dimensions and ensures minimal redundancy across latent feature dimensions via orthogonality constraints. This method significantly improves transmission efficiency, directly aligning with task-oriented communication objectives for visual place recognition tasks.

The main contributions of our work are summarized as follows:
\begin{itemize}
\item We propose O-VIB, an ARD-enhanced VIB encoder with orthogonality constraints, effectively compressing multi-view visual features to significantly reduce transmission overhead without compromising positioning accuracy.
\item We release a large-scale, multi-camera urban positioning dataset comprising 357,690 frames with RGB, semantic segmentation, and depth data across multiple urban environments, targeting scenarios with limited signal availability.
\item Experimental validation on a physical testbed confirms that our O-VIB framework delivers high-precision positioning with minimal bandwidth, affirming its potential for latency-sensitive delivery and logistics applications in mobile edge networks.
\end{itemize}

\section{System Overview}

\subsection{System Architecture and Problem Formulation}

\begin{figure}[t]
  \centering
  \includegraphics[width=0.4\textwidth]{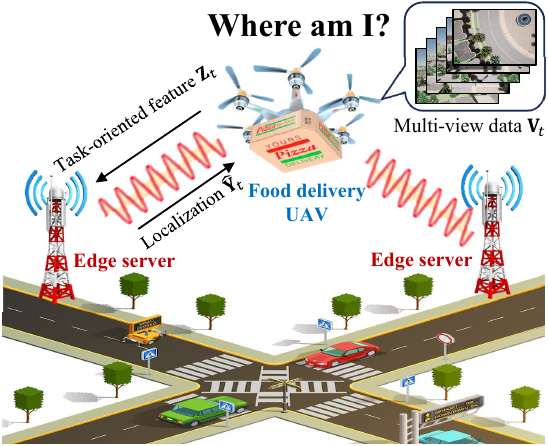}
  \caption{The system model of Edge-aerial collaboration.}
  \label{fig:architecture}
  \vspace{-3mm}
\end{figure}

Consider a UAV-edge collaborative system operating in a GPS-denied urban environment, as illustrated in Fig.~\ref{fig:architecture}. We denote by $\mathcal{U}$ the multi-camera UAV that captures multi-directional view $\mathcal{V} = \{v_1, v_2, \ldots, v_M\}$ where $M=5$ indicates our five-camera configuration (Front, Back, Left, Right, Down). The edge server, denoted by $\mathcal{E}$, maintains a geo-tagged feature database $\mathcal{D} = \{(f_i, l_i)\}_{i=1}^N$, where $f_i$ represents the visual features and $l_i = (x_i, y_i, z_i)$ denotes the corresponding 3D position. The UAV captures multi-view images at time step $t$, represented as $\mathbf{V}_t = \{V_t^{(1)}, V_t^{(2)}, \ldots, V_t^{(M)}\}$. For each view $m \in \{1,2,\ldots,M\}$, a feature extractor $\Phi(\cdot)$ generates high-dimensional features $\mathbf{X}_t^{(m)} = \Phi(V_t^{(m)}) \in \mathbb{R}^d$, where $d=512$ in our implementation. The concatenated multi-view features are denoted as $\mathbf{X}_t = [\mathbf{X}_t^{(1)}, \mathbf{X}_t^{(2)}, \ldots, \mathbf{X}_t^{(M)}] \in \mathbb{R}^{M \times d}$. Our objective is to accurately localize the UAV while minimizing communication overhead. Formally, we aim to solve $\min_{\Theta}\;
   \mathbb{E}\!\bigl[\|\posest{t}-\posgt{t}\|_{2}^{2}\bigr],$
where $\mathcal{C}(\mathbf Z_t)\le C_{\max},$ and $\posest{t}$ represents the estimated UAV position, $\posgt{t}$ is the ground truth position, $\Theta$ denotes the trainable parameters of our framework, $\mathbf{Z}_t$ is the compressed representation transmitted from UAV to edge servers, $\mathcal{C}(\cdot)$ is the communication cost function, and $C_{\max}$ is the maximum allowable communication bandwidth. The extracted features $\mathbf x$, the encoded features $\mathbf z$, and the position estimation $\mathbf y$ are instantiated by random variables $\mathbf{X}_t$, $\mathbf{Z}_t$ and $\mathbf{Y}_t$, respectively.

\subsection{Wireless Communication Model}

For the wireless link between the UAV and an edge server, we adopt the Shannon capacity model. Thus, the achievable data rate $R$ (in bits/s) can be expressed as $R = B \log_2 \left(1 + \frac{P \cdot g}{N_0 \cdot B} \right),$
where $B$ is the channel bandwidth, $P$ is the transmit power, $g$ is the channel gain incorporating path loss, shadowing, and fading effects, while $N_0$ is the noise power spectral density. The channel gain $g$ is modeled as $g = g_0 \cdot \left(\frac{d_0}{d}\right)^\alpha \cdot 10^{\frac{\xi}{10}} \cdot |h|^2$, where $g_0$ is the reference channel gain at distance $d_0$, $d$ is the distance between UAV and edge server, $\alpha$ is the path loss exponent (typically 2-4 in urban environments), $\xi \sim \mathcal{N}(0, \sigma^2)$ represents log-normal shadowing with standard deviation $\sigma$, and $h$ accounts for small-scale fading. The transmission delay for sending compressed features $\mathbf{Z}_t$ is given by $\tau = \frac{|\mathbf{Z}_t|}{R}$, where $|\mathbf{Z}_t|$ denotes the bit-size of the compressed representation.

\subsection{Multi-View Visual Localization Pipeline}

Our multi-view visual localization pipeline comprises three main components: Feature Extraction, Task-Oriented Feature Compression, and Edge-Based Position Inference.

In the \textbf{Feature Extraction} stage, each camera view is processed through a CLIP-based visual encoder $\Phi(\cdot)$ to extract discriminative features. For every view $m \in \{1,2,\ldots,M\}$ at time $t$, the encoded feature is computed as $\mathbf{X}_t^{(m)} = \Phi(V_t^{(m)}).$

As shown in Fig. \ref{fig:encoder}, during the \textbf{Task-Oriented Feature Compression} phase, the extracted multi-view features $\mathbf{X}_t$ are compressed into a task-relevant representation $\mathbf{Z}_t$ using a VIB-based encoder with orthogonality constraints, i.e.
\begin{equation}
\mathbf{Z}_t = \mathcal{E}(\mathbf{X}_t; \Theta_E),
\end{equation}
where $\mathcal{E}(\cdot)$ denotes our proposed encoding function parameterized by $\Theta_E$. As shown in Fig. \ref{fig:decoder}, in the \textbf{Edge-Based Position Inference} stage, the compressed representation is transmitted to the edge server, which applies multi-view attention fusion to estimate the UAV’s position
\begin{equation}
\posest{t} = \mathcal{F}(\mathbf{Z}_t; \Theta_F, \mathcal{D}).
\end{equation}
In this equation, $\mathcal{F}(\cdot)$ represents the fusion and localization function with parameters $\Theta_F$, and $\mathcal{D}$ refers to a geo-tagged database used for querying position information. The position estimation integrates a hybrid method that combines direct regression and retrieval-based inference
\begin{equation}
\posest{t} = \eta \cdot \posest{t}^{reg} + (1-\eta) \cdot \posest{t}^{ret},
\end{equation}
where $\posest{t}^{reg}$ is the regressed position, $\posest{t}^{ret}$ is the retrieved position from the database, and $\eta \in [0,1]$ is an adaptive weight that balances the two estimates based on confidence scores. It is noted that the above end-to-end pipeline is designed to optimize the trade-off between localization accuracy and communication efficiency, thereby enabling precise UAV navigation in GPS-denied environments with constrained wireless bandwidth.


\section{Methodology}\label{sec: method}

\begin{figure*}[t]
    \centering
    
    \begin{minipage}[t]{0.46\textwidth}
        \centering
        \includegraphics[width=\linewidth]{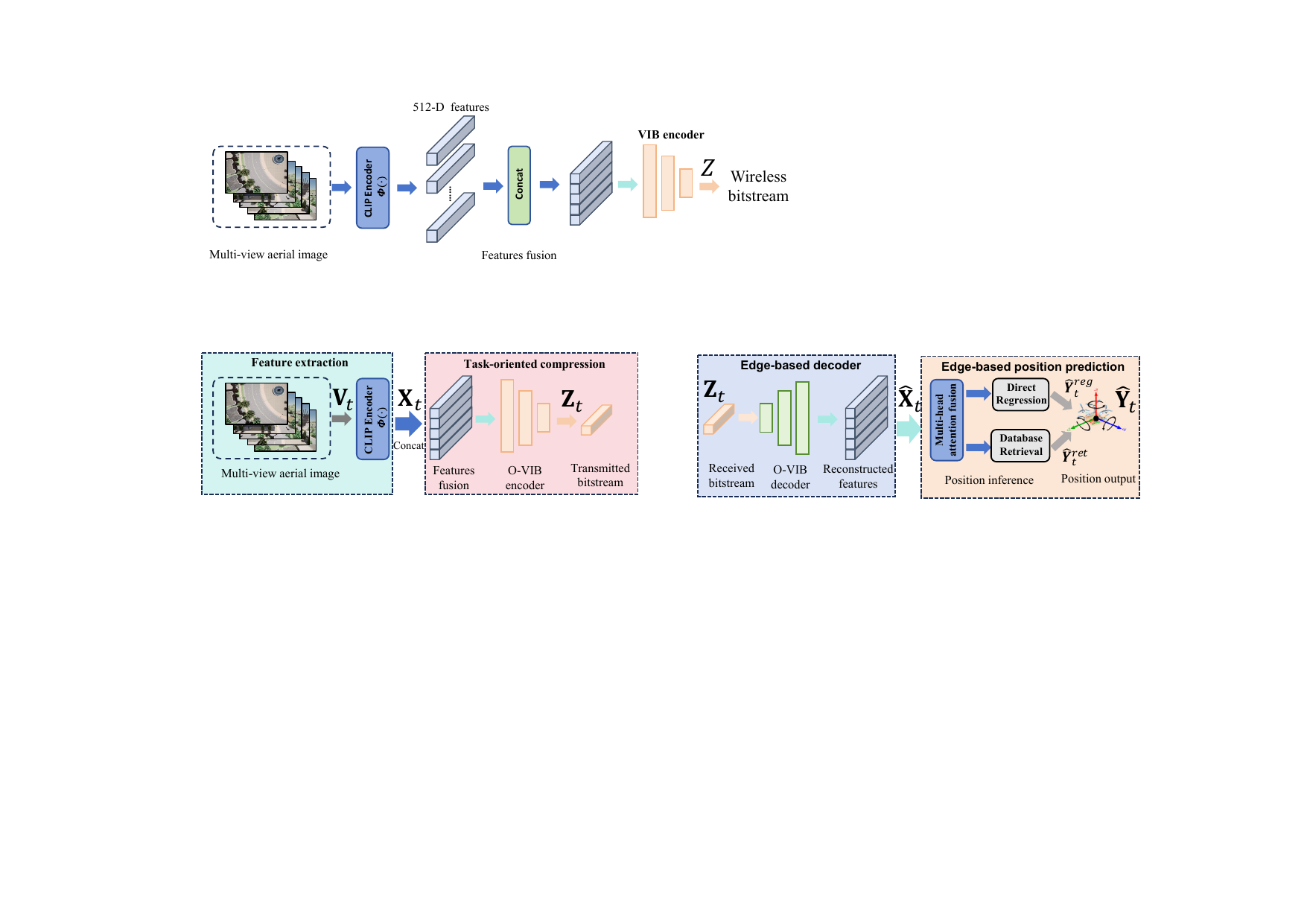}
        \caption{Feature-extraction and task-oriented compression pipeline executed on board the UAV.}
        \label{fig:encoder}
    \end{minipage}
    \hfill
    \begin{minipage}[t]{0.46\textwidth}
        \centering
        \includegraphics[width=\linewidth]{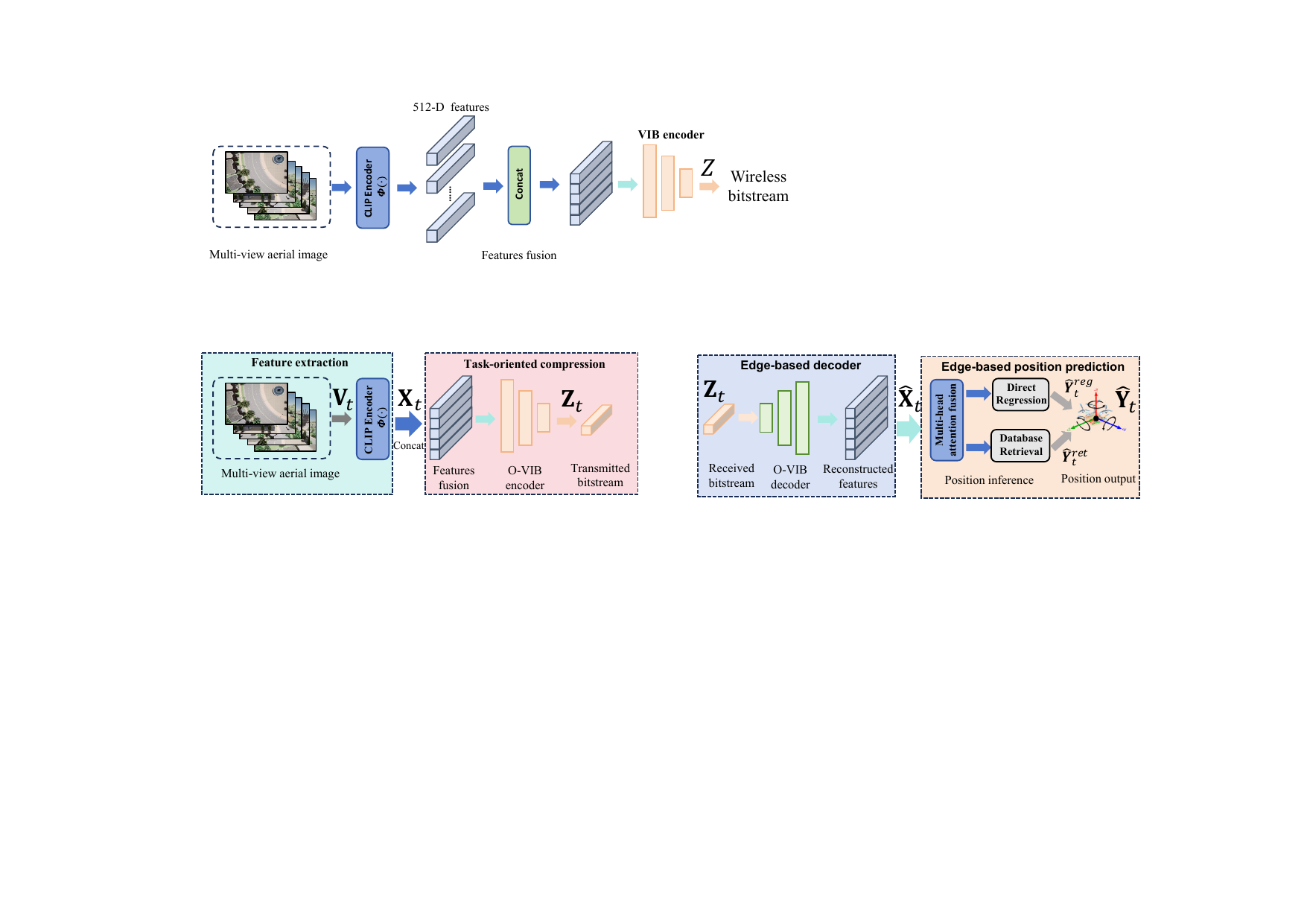}
        \caption{Edge-side decoding and position-prediction pipeline running on RSU servers.}
        \label{fig:decoder}
    \end{minipage}
    
    \vspace{-6mm}
\end{figure*}

\subsection{Task-Oriented Feature Extraction}

To enable discriminative localization under limited bandwidth, we employ a CLIP-based vision backbone for robust multi-view feature extraction. 
Each image $V_t^{(m)}$ is processed via a shared feature extractor $\Phi(\cdot)$ implemented using the CLIP Vision Transformer (ViT-B/32), pretrained on large-scale natural image-text pairs. The encoder $\Phi(\cdot)$ first applies a learned preprocessing function $\psi(\cdot)$ that resizes, normalizes, and tokenizes the image into a sequence of visual patches. The feature encoding is obtained as:
\begin{equation}
\mathbf{X}_t^{(m)} = \Phi(V_t^{(m)}) = f_{\text{CLIP}}\left(\psi(V_t^{(m)}); \theta_\Phi \right) \in \mathbb{R}^{d},
\end{equation}
where $f_{\text{CLIP}}(\cdot)$ denotes the CLIP image encoder, parameterized by $\theta_\Phi$, while $d = 512$ is the dimensionality of the output embedding space for ViT-B/32. We normalize the extracted features to lie on the unit hypersphere to improve numerical stability and facilitate cosine similarity-based downstream retrieval, which yields $\tilde{\mathbf{X}}_t^{(m)} = \frac{\mathbf{X}_t^{(m)}}{\|\mathbf{X}_t^{(m)}\|_2}, \forall m \in \{1, \ldots, M\}.$ Besides, the final multi-view feature tensor is constructed by concatenating view-wise embeddings $\mathbf{X}_t = \left[\tilde{\mathbf{X}}_t^{(1)}; \tilde{\mathbf{X}}_t^{(2)}; \ldots; \tilde{\mathbf{X}}_t^{(M)}\right] \in \mathbb{R}^{M \times d}$. This high-dimensional, view-aligned descriptor $\mathbf{X}_t$ captures a rich, panoramic representation of the UAV’s surroundings.

\subsection{Task–Oriented Feature Compression}\label{sec:compression}

The high-dimensional multi-view descriptor $\mathbf{X}_t \in \mathbb{R}^{M \times d}$ provides comprehensive visual information about the UAV's surroundings. However, due to stringent communication constraints in UAV-edge collaborative systems, it is necessary to compress this descriptor into a compact task-relevant representation\cite{fang2025ton}. The IB principle provides an effective theoretical framework for addressing this challenge. Specifically, IB seeks an optimal stochastic encoder $q_{\phi}(\mathbf{z}|\mathbf{x})$ that generates a latent representation $\mathbf{z}$ by achieving two conflicting goals: minimizing the mutual information $I(\mathbf{x};\mathbf{z})$ to ensure compactness, while maximizing the mutual information $I(\mathbf{z};\mathbf{y})$ to retain task-relevant information about the UAV's position $\mathbf{y}$. The IB optimization problem can thus be formulated as
\begin{equation} \label{OP:IB}
  \min_{\phi}\; 
  \underbrace{\beta\,I(\mathbf x;\mathbf z)}_{\text{Transmission}}-
  \underbrace{I(\mathbf z;\mathbf y)}_{\text{Accuracy}},
\end{equation}
where the non-negative hyperparameter $\beta$ controls the trade-off between feature compression (transmission efficiency) and localization accuracy.
\subsubsection{Why Automatic Relevance Determination (ARD)?}

The optimization problem in \eqref{OP:IB} already performs task-oriented
\emph{rate–relevance} trade-off, yet it counts every latent dimension
equally.  In practice, many coordinates are dispensable.
We impose an \emph{ARD sparsifier} by choosing for each $z_i$ the
\emph{log-uniform} prior
$
  p(z_i)\propto |z_i|^{-1}
$\cite{molchanov2017variational}.
Its heavy tail is scale-invariant and assigns virtually
\emph{no mass near zero}, therefore encouraging uninformative
coordinates to collapse automatically. Let $\mathbf x\in\mathbb R^{M\times d}$ be the concatenated
view features of one frame.
The encoder is a diagonal Gaussian, which is formulated as
$q_{\phi}(\mathbf z\mid\mathbf x)=
  \mathcal N\!\bigl(\boldsymbol\mu_{\phi}(\mathbf x),
          \operatorname{diag}\boldsymbol\sigma^{2}_{\phi}(\mathbf x)\bigr),$
          where $\mathbf z=\boldsymbol\mu+\boldsymbol\sigma\odot\boldsymbol\epsilon,
  \boldsymbol\epsilon\sim\mathcal N(\mathbf 0,\mathbf I).$
The KL divergence between a univariate Gaussian posterior and the log-uniform prior admits an accurate analytic fit
\vspace{-2pt}
\begin{small}
\begin{equation}
\begin{aligned}
  \mathcal D_{\textsc{ard}}\!
  \bigl(\boldsymbol\mu_{\phi},\log\boldsymbol\sigma^{2}_{\phi}\bigr)
  &= \frac1B\sum_{i=1}^{k}\!
  \Bigl[
      k_{1}\,\sigma\!\bigl(k_{2}+k_{3}\log\alpha_{i}\bigr)\\
      &-{\frac{1}{2}}\log\!\bigl(1+e^{-\log\alpha_{i}}\bigr)
  \Bigr] \\
  &\approx \mathrm{KL}\!\bigl(q_{\phi}(\mathbf z\mid\mathbf x)\,\|\,p(\mathbf z)\bigr)
  ,
  \label{eq:ardkl}
\end{aligned}
\end{equation}
\end{small}
where $\alpha_i:=\sigma_i^{2}/\mu_i^{2}$, and the predetermined coefficients are $(k_{1},k_{2},k_{3})=(0.63576,\,1.87320,\,1.48695)$. $B$ represents the minibatch size. According to \textbf{Theorem} \ref{thm:ib_ard}, the traditional IB objective~\eqref{OP:IB} is upper-bounded by the ARD–regularized variational objective in (\ref{eq:vib_obj}).




\begin{lemma}
\label{prop:chain_rule}%
Let $q_{\phi}(\mathbf z\mid\mathbf x)$ be any encoder and let
$p(\mathbf z)$ be an arbitrary prior.  Define the variational mutual
information
$I_{q_{\phi}}(\mathbf x;\mathbf z)
 :=\mathrm{KL}\!\bigl(q_{\phi}(\mathbf x,\mathbf z)\,
      \Vert\,q_{\phi}(\mathbf x)\,q_{\phi}(\mathbf z)\bigr)$
and the marginal
$q_{\phi}(\mathbf z)=\int q_{\phi}(\mathbf z\mid\mathbf x)p(\mathbf x)d\mathbf x$.
Then
\begin{align}
 I_{q_{\phi}}(\mathbf x;\mathbf z)
 &=\mathbb E_{\mathbf x}\!
    \Bigl[\mathrm{KL}\bigl(q_{\phi}(\mathbf z\mid\mathbf x)\,\Vert\,p(\mathbf z)\bigr)\Bigr]
   -\mathrm{KL}\bigl(q_{\phi}(\mathbf z)\,\Vert\,p(\mathbf z)\bigr)
   \label{eq:chain_rule_exact}\\
 &\le \mathbb E_{\mathbf x}\!
    \Bigl[\mathrm{KL}\bigl(q_{\phi}(\mathbf z\mid\mathbf x)\,\Vert\,p(\mathbf z)\bigr)\Bigr].
   \nonumber
\end{align}
If $p(\mathbf z)$ is chosen coordinate-wise log-uniform
($p(z_i)\propto|z_i|^{-1}$) and $q_{\phi}$ is diagonal Gaussian, the
inner KL admits the accurate analytic fit
$\mathcal D_{\textsc{ard}}\bigl(
     \boldsymbol\mu_{\!\phi}(\mathbf x),
     \log\boldsymbol\sigma^{2}_{\!\phi}(\mathbf x)\bigr)$
of~\cite{molchanov2017variational}, so that
\begin{equation}
  I_{q_{\phi}}(\mathbf x;\mathbf z)
  \;\le\;
  \mathbb E_{\mathbf x}\bigl[\mathcal D_{\textsc{ard}}(\mathbf x)\bigr].
  \label{eq:rate_upper}
\end{equation}
\end{lemma}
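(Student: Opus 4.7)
The plan is to treat equation \eqref{eq:chain_rule_exact} as an elementary ``add-and-subtract'' decomposition anchored by the prior $p(\mathbf z)$, and then specialize to the log-uniform/Gaussian pair via the fit of \cite{molchanov2017variational} for \eqref{eq:rate_upper}.

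First, I would write the variational mutual information explicitly as the double integral $\int\!\!\int q_{\phi}(\mathbf z\mid\mathbf x)\,p(\mathbf x)\,\log\tfrac{q_{\phi}(\mathbf z\mid\mathbf x)}{q_{\phi}(\mathbf z)}\,d\mathbf z\,d\mathbf x$, and then insert the dummy term $+\log p(\mathbf z)-\log p(\mathbf z)$ inside the logarithm. This splits the integrand cleanly into $\log\tfrac{q_{\phi}(\mathbf z\mid\mathbf x)}{p(\mathbf z)}-\log\tfrac{q_{\phi}(\mathbf z)}{p(\mathbf z)}$, and the rest is just recognizing each piece. The first piece is, by construction, $\mathbb E_{\mathbf x\sim p(\mathbf x)}[\mathrm{KL}(q_{\phi}(\mathbf z\mid\mathbf x)\,\|\,p(\mathbf z))]$. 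For the second piece, the $\mathbf x$-dependence enters only through the joint $q_{\phi}(\mathbf z\mid\mathbf x)p(\mathbf x)$, whose $\mathbf x$-marginal is precisely $q_{\phi}(\mathbf z)$; integrating $\mathbf x$ out first therefore leaves $\int q_{\phi}(\mathbf z)\log\tfrac{q_{\phi}(\mathbf z)}{p(\mathbf z)}\,d\mathbf z=\mathrm{KL}(q_{\phi}(\mathbf z)\,\|\,p(\mathbf z))$. Together these give \eqref{eq:chain_rule_exact}, and the displayed upper bound follows immediately from $\mathrm{KL}(q_{\phi}(\mathbf z)\,\|\,p(\mathbf z))\ge 0$.

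Next, to derive \eqref{eq:rate_upper} I would specialize the inner KL to the log-uniform prior $p(z_i)\propto|z_i|^{-1}$ paired with a factorized Gaussian posterior $\mathcal N(\mu_i,\sigma_i^2)$ per coordinate. The primitive of $\mathcal N(\mu_i,\sigma_i^2)\log|z_i|$ is not elementary, so I would not attempt a direct symbolic integration; instead I would invoke the sigmoid-based analytic fit of \cite{molchanov2017variational}, noting that the only invariant of $(\mu_i,\sigma_i^2)$ under rescaling is $\log\alpha_i=\log(\sigma_i^2/\mu_i^2)$, which mirrors the scale-invariance of the log-uniform prior and explains why $\mathcal D_{\textsc{ard}}$ depends solely on this quantity. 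Substituting $\mathcal D_{\textsc{ard}}$ for the true inner KL inside the expectation and chaining with the previous step yields \eqref{eq:rate_upper}.

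The main obstacle lies entirely in this last step: the log-uniform density is improper, so the KL against it is only well-defined through a limiting argument, and the fit itself carries a small pointwise approximation error. My preferred way to handle this, and what I would state in the final write-up, is to make explicit that the decomposition \eqref{eq:chain_rule_exact} is an exact equality while \eqref{eq:rate_upper} is to be read in the same approximate sense as in \cite{molchanov2017variational}, so that the quality of the bound inherits only the (empirically tight) fitting error of the $(k_1,k_2,k_3)$ coefficients rather than any new slack introduced by the chain rule.
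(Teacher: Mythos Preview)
Your proposal is correct and matches the paper's own argument: the paper likewise obtains \eqref{eq:chain_rule_exact} from the KL chain rule (your add-and-subtract of $\log p(\mathbf z)$ is exactly that identity written out), drops the non-negative marginal KL for the inequality, and then replaces the inner KL by the Molchanov \emph{et al.} fit to reach \eqref{eq:rate_upper}. Your explicit remarks on the improper log-uniform prior and the approximation error of $\mathcal D_{\textsc{ard}}$ go slightly beyond the paper's terse proof but are entirely in line with it.
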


\begin{proof}
Eq.~\eqref{eq:chain_rule_exact} follows from the classical chain
rule for KL divergence; dropping the second (non-negative) term yields
the inequality. \(\mathcal D_{\textsc{ard}}(\mu_i,\log\sigma_i^2)\approx\mathrm{KL}\!\bigl(q_{\phi}(\mathbf z\mid\mathbf x)\,\|\,p(\mathbf z)\bigr)\), which Molchanov \emph{et al.} show has a maximum absolute error below \(10^{-3}\) for \(\alpha_i\in[10^{-4},10^4]\)\cite{molchanov2017variational}.
Replacing the conditional KL by its
$\mathrm{ARD}$ fit gives Ineq. \eqref{eq:rate_upper}.{\hfill $\blacksquare$\par}
\end{proof}

\begin{lemma}\label{prop:izy_bound}
For any decoder \( p_{\theta}(\mathbf{y}|\mathbf{z}) \) and joint distribution \( p(\mathbf{z},\mathbf{y}) \), the mutual information between latent representation \(\mathbf{z}\) and task variable \(\mathbf{y}\) is lower-bounded by
\begin{equation}\label{eq:izy_bound}
  I(\mathbf{z};\mathbf{y}) \geq \mathbb{E}_{\mathbf{z},\mathbf{y}}\left[\log p_{\theta}(\mathbf{y}|\mathbf{z})\right] - \mathrm{H}(\mathbf{y}),
\end{equation}
where \(\mathrm{H}(\mathbf{y}) = -\mathbb{E}_{\mathbf{y}}\left[\log p(\mathbf{y})\right]\) is the entropy of \(\mathbf{y}\), a constant independent of model parameters \((\phi,\theta)\).
\end{lemma}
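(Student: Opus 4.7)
My plan is to prove this as the classical Barber--Agakov variational lower bound on mutual information: one replaces the intractable true posterior $p(\mathbf{y}\mid\mathbf{z})$ by the parametric decoder $p_{\theta}(\mathbf{y}\mid\mathbf{z})$ and pays for the substitution with a non-negative KL slack that can simply be discarded. The argument runs entirely at the level of expectations and needs no structural assumption on $p_\theta$ beyond that the KL term is well defined.

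First I would expand the mutual information through its log-ratio form, $I(\mathbf{z};\mathbf{y}) = \mathbb{E}_{\mathbf{z},\mathbf{y}}\!\left[\log p(\mathbf{y}\mid\mathbf{z})\right] - \mathbb{E}_{\mathbf{y}}\!\left[\log p(\mathbf{y})\right]$, and identify the second term with $H(\mathbf{y})$ via the definition supplied in the statement. Second, I would introduce the variational decoder inside the first expectation through the algebraic identity $\log p(\mathbf{y}\mid\mathbf{z}) = \log p_{\theta}(\mathbf{y}\mid\mathbf{z}) + \log\!\frac{p(\mathbf{y}\mid\mathbf{z})}{p_{\theta}(\mathbf{y}\mid\mathbf{z})}$. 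Third, applying the tower property so that the $\mathbf{y}\mid\mathbf{z}$ integration is performed first, the residual term collapses to $\mathbb{E}_{\mathbf{z}}\!\left[\mathrm{KL}\bigl(p(\mathbf{y}\mid\mathbf{z})\,\|\,p_{\theta}(\mathbf{y}\mid\mathbf{z})\bigr)\right]\ge 0$ by Gibbs' inequality. Fourth, dropping this non-negative term delivers the stated lower bound, and the right-hand side now depends on $\theta$ only through the expected log-likelihood, which is the tractable surrogate used to train the decoder against the ground-truth positions.

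The main obstacle is essentially bookkeeping: tracking the sign convention that relates $-\mathbb{E}_{\mathbf{y}}[\log p(\mathbf{y})]$ to $H(\mathbf{y})$ and confirming that $H(\mathbf{y})$ is indeed constant with respect to the trainable parameters $(\phi,\theta)$ (so it drops out of any gradient step on the IB objective in~\eqref{OP:IB}). The one modest regularity requirement is that $p(\mathbf{y}\mid\mathbf{z}) \ll p_{\theta}(\mathbf{y}\mid\mathbf{z})$ so that the conditional KL is well defined; this is automatically satisfied whenever $p_{\theta}$ has full support, which is the case for the Gaussian regression head used in our localization decoder. No further probabilistic machinery is needed, and combining this lower bound with the upper bound from Lemma~\ref{prop:chain_rule} will, in the subsequent Theorem, yield a tractable surrogate for the intractable IB objective~\eqref{OP:IB}.
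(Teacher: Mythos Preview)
Your proposal is correct and follows essentially the same route as the paper's proof: both decompose $I(\mathbf{z};\mathbf{y})=H(\mathbf{y})-H(\mathbf{y}\mid\mathbf{z})$ and then invoke non-negativity of the KL divergence $\mathrm{KL}\bigl(p(\mathbf{y}\mid\mathbf{z})\,\|\,p_{\theta}(\mathbf{y}\mid\mathbf{z})\bigr)$ to replace the true conditional by the variational decoder. Your write-up is simply more explicit (naming it as the Barber--Agakov bound, invoking the tower property, and flagging the absolute-continuity caveat), but no new idea is required beyond what the paper uses.
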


\begin{proof}
According to definition, we have $I(\mathbf{z};\mathbf{y}) = \mathrm{H}(\mathbf{y}) - \mathrm{H}(\mathbf{y}|\mathbf{z})$. Since the KL divergence is always non-negative, we have $-\mathrm{H}(\mathbf{y}|\mathbf{z}) \geq \mathbb{E}_{\mathbf{z},\mathbf{y}}\left[\log p_{\theta}(\mathbf{y}|\mathbf{z})\right]$.
Combining the above relations, we obtain $I(\mathbf{z};\mathbf{y}) \geq \mathbb{E}_{\mathbf{z},\mathbf{y}}\left[\log p_{\theta}(\mathbf{y}|\mathbf{z})\right] - \mathrm{H}(\mathbf{y})$.
\hfill\(\blacksquare\)
\end{proof}


\begin{theorem}\label{thm:ib_ard}
The traditional IB objective~\eqref{OP:IB}
is upper-bounded by the \emph{ARD–regularized}
variational objective
\begin{equation}
  \min_{\phi}\;
  \beta\,\mathbb E_{\mathbf x}\bigl[\mathcal D_{\textsc{ard}}(\mathbf x)\bigr]
  -\mathbb E_{\mathbf z,\mathbf y}
      \bigl[\log p_{\theta}(\mathbf y\mid\mathbf z)\bigr],
  \label{eq:vib_obj}
\end{equation}
which is tractable, differentiable, and automatically
prunes uninformative latent coordinates by driving their contribution to near-zero.
\end{theorem}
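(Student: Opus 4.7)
The plan is to attack Theorem \ref{thm:ib_ard} as a direct composition of the two preceding lemmas: each term of the IB objective \eqref{OP:IB} admits a tractable surrogate, and summing those two surrogates produces \eqref{eq:vib_obj} up to a constant that does not affect the minimization. I would first bound the rate term $\beta\,I(\mathbf{x};\mathbf{z})$ from above by invoking Lemma \ref{prop:chain_rule}, whose inequality \eqref{eq:rate_upper} gives $\beta\,I_{q_{\phi}}(\mathbf{x};\mathbf{z}) \le \beta\,\mathbb{E}_{\mathbf{x}}\bigl[\mathcal{D}_{\textsc{ard}}(\mathbf{x})\bigr]$ once multiplied by the non-negative coefficient $\beta$. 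Next I would bound $-I(\mathbf{z};\mathbf{y})$ from above by negating Lemma \ref{prop:izy_bound}, which yields $-I(\mathbf{z};\mathbf{y}) \le -\mathbb{E}_{\mathbf{z},\mathbf{y}}\bigl[\log p_{\theta}(\mathbf{y}\mid\mathbf{z})\bigr] + \mathrm{H}(\mathbf{y})$. Since $\mathrm{H}(\mathbf{y})$ depends only on the data distribution and is independent of $(\phi,\theta)$, it is a constant additive term that can be dropped when forming the optimization surrogate; adding the two inequalities then recovers exactly \eqref{eq:vib_obj}.

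The tractability and differentiability claims follow immediately from this construction. The closed-form expression $\mathcal{D}_{\textsc{ard}}$ is a smooth function of the encoder statistics $(\boldsymbol\mu_{\phi},\log\boldsymbol\sigma^{2}_{\phi})$, and the decoder log-likelihood expectation is made pathwise-differentiable by the reparameterization $\mathbf{z}=\boldsymbol\mu+\boldsymbol\sigma\odot\boldsymbol\epsilon$, so both terms are compatible with minibatch stochastic gradient descent without any sampling-based estimator of a raw KL divergence. A brief remark would also reconcile the ``$\approx$'' in \eqref{eq:ardkl} with the ``$\le$'' required by the theorem: since the fit error of $\mathcal{D}_{\textsc{ard}}$ is below $10^{-3}$ on the range of interest, the inequality can be stated either as exact (if one defines the prior so that $\mathcal{D}_{\textsc{ard}}$ \emph{is} the KL) or as holding up to a negligible numerical slack, either reading being standard in the VIB literature.

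The step I expect to require the most care is the final pruning claim, because it is a qualitative statement about optimizer behaviour rather than a tight inequality. I would justify it by inspecting the per-coordinate shape of $\mathcal{D}_{\textsc{ard}}$: when the variance-to-signal ratio $\alpha_i=\sigma_i^{2}/\mu_i^{2}$ of a latent dimension diverges, the fitted expression saturates near its minimum, so the rate gradient ceases to exert pressure once $\alpha_i$ is large. For any coordinate that the decoder does not actually use there is no counteracting gradient from the log-likelihood term, hence the optimizer is free to drive $\log\alpha_i\to\infty$, at which point the posterior collapses onto the log-uniform prior and the coordinate transmits no information. The approximation guarantee of \cite{molchanov2017variational} is what allows this optimizer-level argument to carry over to the true information-theoretic objective, so the automatic pruning is a genuine consequence of the log-uniform prior rather than an artefact of the analytic fit.
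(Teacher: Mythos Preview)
Your proposal is correct and follows exactly the paper's own route: the paper states only that Theorem~\ref{thm:ib_ard} ``can be proven by combining Lemmas~\ref{prop:chain_rule} and~\ref{prop:izy_bound},'' which is precisely the two-term decomposition you carry out (rate bounded above via~\eqref{eq:rate_upper}, accuracy bounded via~\eqref{eq:izy_bound}, constant $\mathrm{H}(\mathbf y)$ discarded). Your additional remarks on the $\approx$/$\le$ reconciliation and on the qualitative pruning mechanism go beyond what the paper spells out but are consistent with it.
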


\textbf{Theorem} \ref{thm:ib_ard} can be proven by combining \textbf{Lemmas}~\ref{prop:chain_rule} and \ref{prop:izy_bound}. Eq.~\eqref{eq:vib_obj} is what we optimize in practice. More explicitly, the
first term is computed with~\eqref{eq:ardkl},
while the second term is tightened by a variational decoder
$p_{\theta}(\mathbf y\mid\mathbf z)$ as in
\textbf{Lemma}~\ref{prop:izy_bound}.

\subsubsection{Orthogonality Under the IB Objective}\label{subsubsec:orthogonal_ib}

In our VIB-based encoder design, we compress the multi-view feature $\mathbf{x}\in \mathbb{R}^{M\times d}$ into a low-dimensional latent representation $\mathbf{z} \in \mathbb{R}^k$. Following the IB principle, we maintain a small mutual information $\beta I(\mathbf{x};\mathbf{z})$ to limit bandwidth usage while retaining sufficient information about $\mathbf{x}$ for accurate prediction of $\mathbf{y}$. To optimize the utilization of the limited information budget, we impose approximate row-orthogonality on the encoder's weight matrix $\mathbf{W}$. The proposition \ref{proposition:vib_orthogonal} shows that if $\mathbf{W}\mathbf{W}^\top$ is close to an identity matrix, the variance of each latent coordinate remains close to the average $\frac{1}{k}\operatorname{Tr}(\mathbf{W}\Sigma_x\mathbf{W}^\top)$. Consequently, no latent dimension collapses to near-zero variance, thereby avoiding redundancy in the latent representation.

\begin{proposition}\label{proposition:vib_orthogonal}
Let \(\mathbf{W}\in\mathbb{R}^{k\times d}\) denote the weight matrix of the Variational Information Bottleneck (VIB) encoder layer. Assume the approximate orthogonality condition
\[
\mathbf{W}\mathbf{W}^\top = \mathbf{I}_k + \boldsymbol{\Delta},
\]
where \(\mathbf{I}_k\) denotes the \(k\times k\) identity matrix and \(\boldsymbol{\Delta}\) is a symmetric perturbation matrix satisfying \(\|\boldsymbol{\Delta}\| \le \varepsilon\) for a small \(\varepsilon>0\). Let \(\mathbf{x}\in\mathbb{R}^d\) be an input vector with covariance \(\Sigma_x = \mathrm{Cov}(\mathbf{x})\). Define the latent representation \(\mathbf{z} = \mathbf{W}\mathbf{x}\), and let \(a_i = \mathbf{w}_i \Sigma_x \mathbf{w}_i^\top\) represent the variance along the \(i\)th latent dimension, where \(\mathbf{w}_i\) is the \(i\)th row of \(\mathbf{W}\). Define the average variance \(T = \frac{1}{k}\sum_{i=1}^k a_i = \frac{1}{k}\operatorname{Tr}(\mathbf{W}\Sigma_x\mathbf{W}^\top)\). Then, the minimum variance satisfies
\[
\min_{1\le i\le k} a_i \ge T - C\varepsilon,
\]
where \(C>0\) is a constant dependent on \(\|\Sigma_x\|\).
\end{proposition}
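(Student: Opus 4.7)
The plan is to reduce the claim to controlling the entries of the Gram-type matrix $\mathbf{M}:=\mathbf{W}\Sigma_x\mathbf{W}^\top\in\mathbb{R}^{k\times k}$, since $a_i=\mathbf{M}_{ii}$ and $T=\tfrac{1}{k}\operatorname{Tr}(\mathbf{M})$. The near-orthogonality hypothesis $\mathbf{W}\mathbf{W}^\top=\mathbf{I}_k+\boldsymbol{\Delta}$ with $\|\boldsymbol{\Delta}\|\le\varepsilon$ forces the squared singular values of $\mathbf{W}$ into the interval $[1-\varepsilon,1+\varepsilon]$, so by Weyl's inequality $\mathbf{W}$ differs from a matrix $\mathbf{W}_0$ with \emph{exactly} orthonormal rows (obtained by SVD rescaling, $\mathbf{W}=U\Sigma V^\top\mapsto\mathbf{W}_0=UV^\top$) by a perturbation $\mathbf{E}=\mathbf{W}-\mathbf{W}_0$ with operator norm bounded by $|\sqrt{1\pm\varepsilon}-1|=O(\varepsilon)$.

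First I would write $\mathbf{W}=\mathbf{W}_0+\mathbf{E}$ and expand
\[
\mathbf{M}=\mathbf{W}_0\Sigma_x\mathbf{W}_0^\top+\mathbf{R},
\]
where $\mathbf{R}$ collects the three cross terms containing $\mathbf{E}$. Sub-multiplicativity of the operator norm yields $\|\mathbf{R}\|_{\mathrm{op}}\le C_1\varepsilon\|\Sigma_x\|$, so each diagonal entry of $\mathbf{R}$ is uniformly $O(\varepsilon\|\Sigma_x\|)$ because $|R_{ii}|\le\|\mathbf{R}\|_{\mathrm{op}}$. Applying the same decomposition to the trace gives $T=\tfrac{1}{k}\operatorname{Tr}(\mathbf{W}_0\Sigma_x\mathbf{W}_0^\top)+O(\varepsilon\|\Sigma_x\|)$, so after this stage the question reduces to comparing the diagonal entries of the \emph{unperturbed} matrix $\mathbf{M}_0:=\mathbf{W}_0\Sigma_x\mathbf{W}_0^\top$ to their own mean.

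The main obstacle lies precisely in this final comparison. With orthonormal rows, one has the spectral sandwich $\lambda_{\min}(\Sigma_x)\le(\mathbf{M}_0)_{ii}\le\|\Sigma_x\|$ and the identical sandwich on $\tfrac{1}{k}\operatorname{Tr}(\mathbf{M}_0)$, so a naive comparison produces a gap of order $\|\Sigma_x\|-\lambda_{\min}(\Sigma_x)$ rather than an $O(\varepsilon)$ one. The key idea I would pursue is to use the residual slack $\boldsymbol{\Delta}$ itself to convert that spectral spread into a genuine $\varepsilon$-order quantity: writing $(\mathbf{M}_0)_{ii}-\tfrac{1}{k}\operatorname{Tr}(\mathbf{M}_0)=\mathbf{e}_i^\top\mathbf{M}_0\mathbf{e}_i-\tfrac{1}{k}\operatorname{Tr}(\mathbf{M}_0)$ and observing that under exact orthonormality this difference is controlled by how much $\mathbf{W}_0\mathbf{W}_0^\top$ could have deviated from $\mathbf{I}_k$ while remaining within $\boldsymbol{\Delta}$, the spread factor $\|\Sigma_x\|$ is absorbed into a constant $C$ and is multiplied by $\varepsilon$. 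This is where the linear dependence of $C$ on $\|\Sigma_x\|$ originates in the statement.

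Combining the two perturbation bounds then yields $a_i\ge T-C\varepsilon$ uniformly in $i$, giving the proposition. Throughout, I would rely only on Weyl-type singular-value perturbation, sub-multiplicativity of the operator norm, and the elementary fact that $|R_{ii}|\le\|\mathbf{R}\|_{\mathrm{op}}$; no probabilistic or distributional assumption on $\mathbf{x}$ beyond the existence of $\Sigma_x$ is needed.
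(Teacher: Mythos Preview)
Your overall architecture matches the paper's: both decompose $\mathbf{W}=\mathbf{W}_0+\mathbf{E}$ with $\mathbf{W}_0\mathbf{W}_0^\top=\mathbf{I}_k$, expand $a_i$, and bound the cross terms containing $\mathbf{E}$ by $O(\varepsilon\|\Sigma_x\|)$. You reach this via the SVD-rescaling construction of $\mathbf{W}_0$ and operator-norm sub-multiplicativity; the paper writes the same expansion row by row and applies Cauchy--Schwarz. Up to this point the two arguments are essentially identical.

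The divergence---and the genuine gap---is in the leading term $(\mathbf{M}_0)_{ii}=\mathbf{w}_{0,i}\Sigma_x\mathbf{w}_{0,i}^\top$. The paper simply labels this quantity ``Constant $T$'' (asserting that under exact orthogonality each dimension captures equal variance) and proceeds. You correctly flag this as ``the main obstacle'': for orthonormal rows and a generic $\Sigma_x$ this entry can lie anywhere in $[\lambda_{\min}(\Sigma_x),\|\Sigma_x\|]$ and need not equal the average. But your proposed resolution is not an argument. The sentence claiming that the residual slack $\boldsymbol{\Delta}$ ``converts that spectral spread into a genuine $\varepsilon$-order quantity'' has no mechanism behind it: once $\mathbf{W}_0$ is fixed with $\mathbf{W}_0\mathbf{W}_0^\top=\mathbf{I}_k$, the perturbation $\boldsymbol{\Delta}$ is already fully absorbed into $\mathbf{E}$, and $\mathbf{M}_0$ depends only on $\mathbf{W}_0$ and $\Sigma_x$. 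Concretely, take $k=d=2$, $\mathbf{W}=\mathbf{I}_2$, $\Sigma_x=\operatorname{diag}(1,3)$; then $\varepsilon=0$ yet $\min_i a_i=1<2=T$, so no finite $C$ rescues the inequality. Your instinct that something is missing here is right; neither your sketch nor the paper's one-line assertion closes this step without an additional hypothesis on $\Sigma_x$ (e.g.\ isotropy).
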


\begin{Proof}
We begin by observing the latent covariance \(\Sigma_z = \mathrm{Cov}(\mathbf{z}) = \mathbf{W}\Sigma_x\mathbf{W}^\top\). Hence, the total variance is given by
\[
\operatorname{Tr}(\Sigma_z) = \sum_{i=1}^k \mathbf{w}_i\Sigma_x\mathbf{w}_i^\top = \sum_{i=1}^k a_i,
\]
and thus \(T = \frac{1}{k}\operatorname{Tr}(\Sigma_z)\). Under ideal orthogonality (\(\mathbf{W}\mathbf{W}^\top = \mathbf{I}_k\)), each dimension captures an equal variance \(T\). However, approximate orthogonality in practice is modeled by decomposing \(\mathbf{W}\)
\[
\mathbf{W} = \mathbf{W}_0 + \Delta \mathbf{W},\quad \text{with}\quad \mathbf{W}_0\mathbf{W}_0^\top = \mathbf{I}_k,
\]
and perturbation \(\Delta\mathbf{W}\) satisfying \(\mathbf{W}\mathbf{W}^\top = \mathbf{I}_k + \boldsymbol{\Delta}\) and \(\|\boldsymbol{\Delta}\| \le \varepsilon\). Considering \(\mathbf{w}_i = \mathbf{w}_{0,i}+\Delta\mathbf{w}_i\), we obtain:
\[
a_i = (\mathbf{w}_{0,i}+\Delta \mathbf{w}_i)\Sigma_x(\mathbf{w}_{0,i}+\Delta \mathbf{w}_i)^\top.
\]
Expanding this expression yields:
\[
a_i = \underbrace{\mathbf{w}_{0,i}\Sigma_x\mathbf{w}_{0,i}^\top}_{\text{Constant} \ T} + \underbrace{2\Delta \mathbf{w}_i\Sigma_x\mathbf{w}_{0,i}^\top}_{\text{First-order term}} + \underbrace{\Delta \mathbf{w}_i\Sigma_x\Delta \mathbf{w}_i^\top}_{\text{Second-order term}}.
\]

Applying the Cauchy–Schwarz inequality and noting $\lVert\mathbf w_{0,i}\rVert_2=1$, the perturbation terms satisfy:
\[
|2\Delta \mathbf{w}_i\Sigma_x\mathbf{w}_{0,i}^\top| \le 2\|\Delta \mathbf{w}_i\|\|\Sigma_x\|,
\]
and similarly, we have $|\Delta \mathbf{w}_i\Sigma_x\Delta \mathbf{w}_i^\top| \le \|\Delta \mathbf{w}_i\|^2\|\Sigma_x\|$.
Since \(\|\Delta \mathbf{w}_i\| \le \varepsilon\), it follows that $|a_i - T| \le 2\varepsilon\|\Sigma_x\| + \varepsilon^2\|\Sigma_x\|$. For sufficiently small \(\varepsilon\), we consolidate terms and define a constant \(C\) depending on \(\|\Sigma_x\|\), yielding $|a_i - T| \le C\varepsilon,$ and consequently we can prove $\min_{1\le i\le k} a_i \ge T - C\varepsilon$.
\(\hfill\blacksquare\)
\end{Proof}

According to \textbf{Proposition} \ref{proposition:vib_orthogonal}, the approximate orthogonality of the encoder weights ensures that each latent dimension retains significant variance, avoiding collapsed dimensions. Under tight information bottleneck constraints, this property ensures each latent dimension effectively contributes to preserving relevant information, optimizing the latent representation \(\mathbf{z}\) with respect to the target variable \(\mathbf{y}\). This behavior maximizes the mutual information \(I(\mathbf{z};\mathbf{y})\) subject to channel capacity constraints, enhancing the efficiency and accuracy of task-oriented data compression.

\subsubsection{Joint Encoding for Reducing Multi-View Redundancy}\label{sec:joint_enc}

Rather than compressing each camera stream in isolation, we concatenate the \(M\) view-wise embeddings into a single vector and pass it through one VIB–ARD encoder.  
This strategy exploits inter–view correlations so that the latent code stores only complementary information.



\subsubsection{Overall Training Objective}

Combining reconstruction fidelity, localization accuracy, ARD-based rate control, and orthogonality regularisation yields the composite loss
\begin{equation}
\begin{aligned}
\mathcal L(\phi)= &\;
\underbrace{\mathbb E\bigl[\lVert\mathbf x-\hat{\mathbf x}\rVert^{2}\bigr]}_{\text{Reconstruction}}
+ \alpha\underbrace{\mathbb E\bigl[\lVert{\mathbf y-\hat{\mathbf y}}\rVert^{2}\bigr]}_{\text{Localisation}} \\[2pt]
&+ \beta\underbrace{\mathbb E_{\mathbf x}\!\bigl[\mathcal D_{\textsc{ard}}(\mathbf x)\bigr]}_{\text{Information bottleneck}}
+ \gamma\underbrace{\lVert\mathbf W\mathbf W^{\top}-\mathbf I\rVert^{2}_{F}}_{\text{Orthogonality}},
\end{aligned}
\end{equation}
where \(\mathbf x\) and \(\hat{\mathbf x}\) are respectively the input and reconstructed multi-view features,  
\(\mathbf y\) and \(\hat{\mathbf y}\) are respectively the ground-truth and predicted UAV positions. Furthermore, 
\(\mathcal D_{\textsc{ard}}\) is the closed-form KL term in~\eqref{eq:ardkl} that promotes sparsity of the latent code,  
while \(\mathbf W\) denotes the weight matrix of the encoder projection.  
The coefficients \(\alpha,\beta,\gamma>0\) balance the four competing objectives.  
Orthogonality guarantees that every retained latent dimension remains informative, whereas the ARD penalty drives superfluous coordinates toward zero variance, thereby enabling hard pruning after training.

\section{Performance Evaluation}
\begin{figure*}[t]
    \centering

    \begin{minipage}[t]{0.41\textwidth}
        \centering
        \includegraphics[width=\linewidth]{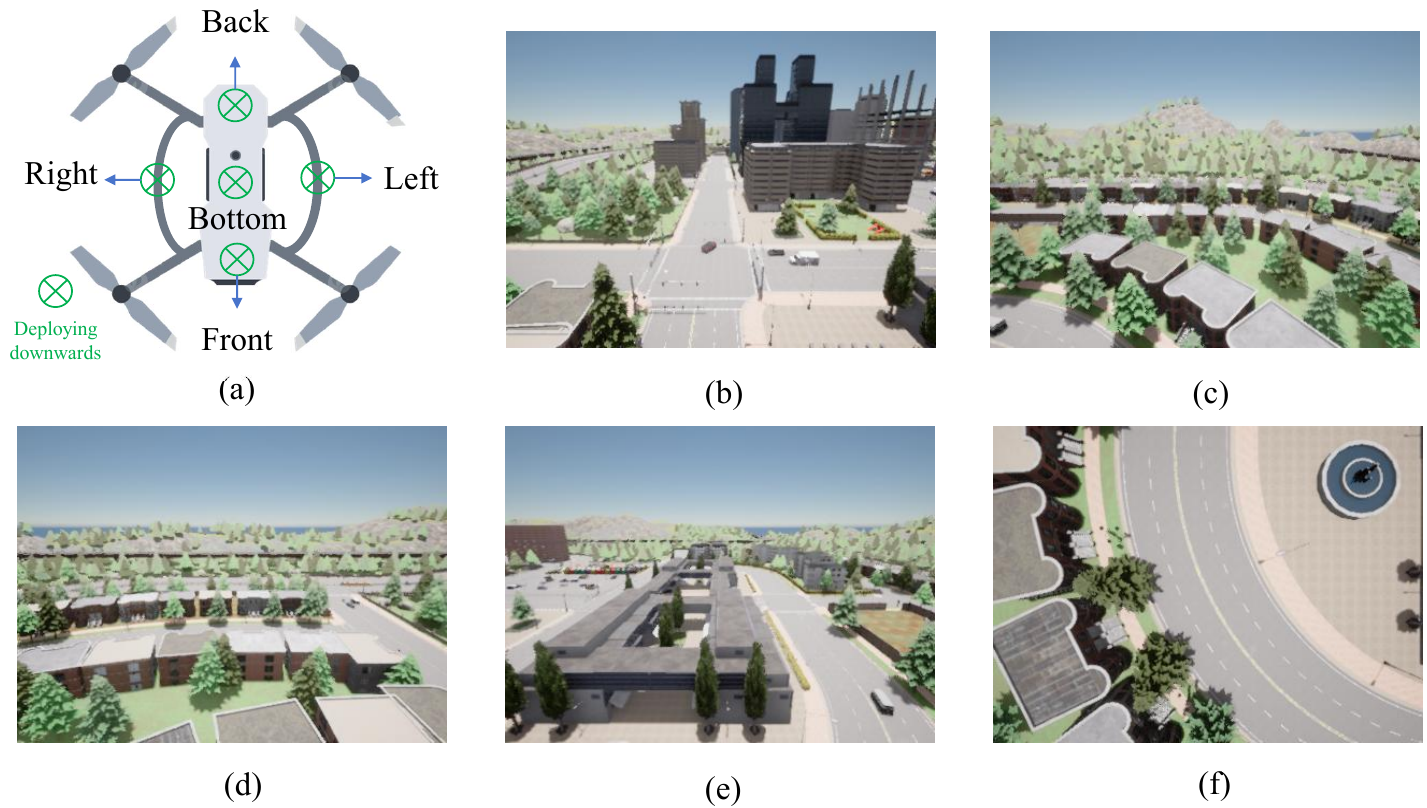}
        \caption{Multi-camera UAV perception system and corresponding visual observations.}
        \label{fig:drone-camera-views}
    \end{minipage}
    \hfill
    \begin{minipage}[t]{0.48\textwidth}
        \centering
        \includegraphics[width=\linewidth]{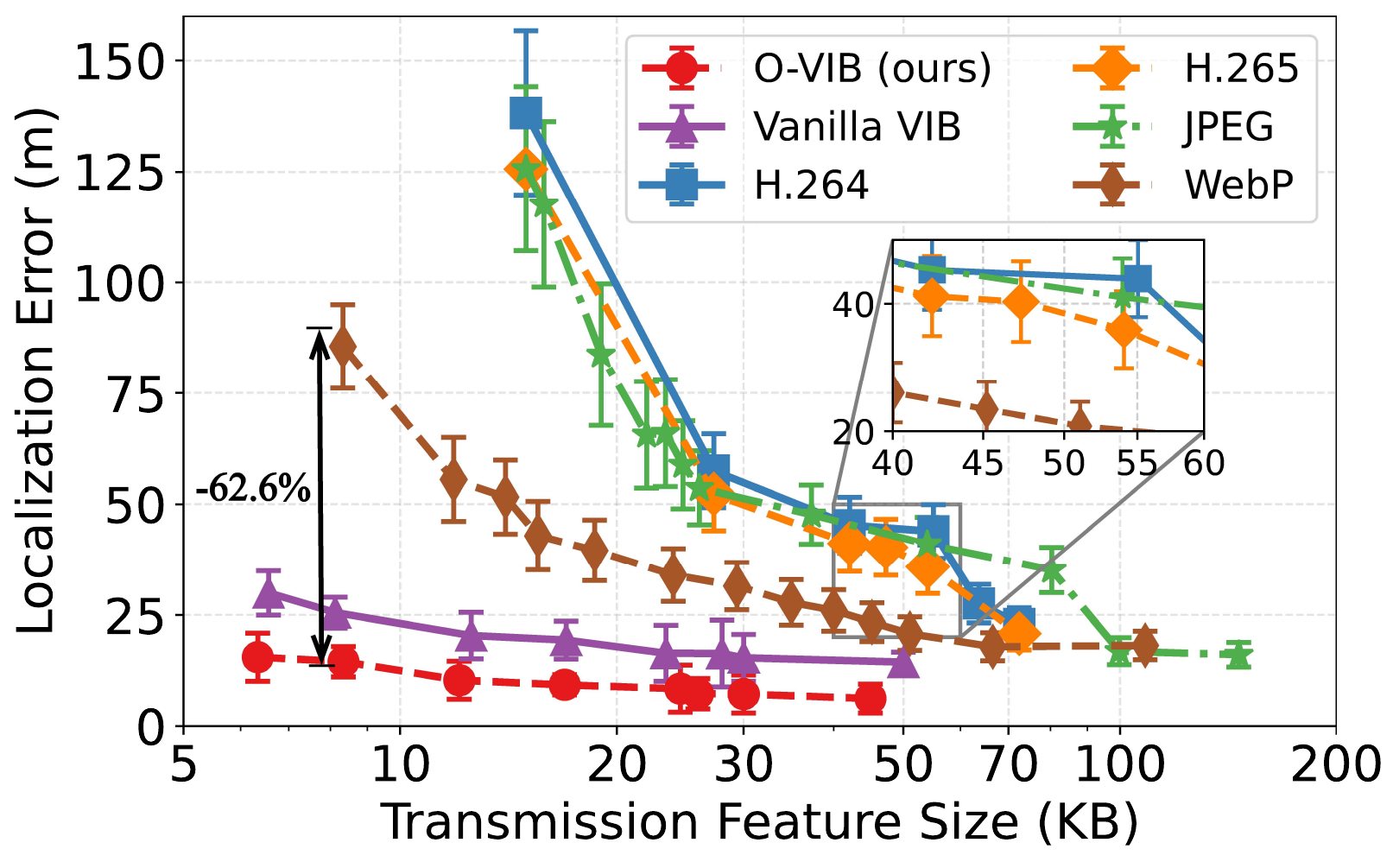}
  \caption{Transmission feature size vs localization error.}
  \label{fig:CM}
    \end{minipage}
\vspace{-3mm}
\end{figure*}

As shown in Fig.~\ref{fig:drone-camera-views}, we collect a new dataset for visual navigation of UAVs in the CARLA simulator that mimics GNSS-denied flight over eight representative maps in cities.  
A UAV flies at a constant height following waypoints aligned with the road, changing direction randomly. 
Five onboard cameras capture images from different angles and directions, recording RGB, semantic, and depth images at a 400 × 300 pixel resolution. A total of 357,690 multi-view frames are recorded, each labeled with the precise localization and rotation. The framework is deployed on real hardware devices to evaluate the computation and communication latency of our proposed methods. 
Each UAV carries a Jetson Orin NX 8GB that encodes five camera streams and transmits compressed features to the nearby roadside units (RSUs) through wireless channels (IEEE 802.11).  
Two classes of RSU are deployed: (i) \emph{Relay RSU}: Raspberry Pi 5 16 GB that forward data by Gigabit Ethernet to a cloud edge server when overloaded; (ii) \emph{Edge RSU}: Jetson Orin NX Super 16 GB that performs on-board inference.

Our primary metric is localization error (Euclidean distance to ground-truth pose).  
We additionally report the entropy–performance trade-off, i.e.\ how latent entropy after pruning correlates with accuracy, and measure end-to-end latency (UAV capture $\rightarrow$ position estimation in edge server). Moreover, we compare our orthogonal VIB encoder to five advanced codecs: vanilla VIB \cite{10480247}, JPEG \cite{wallace1992jpeg}, H.264 \cite{H264}, H.265/HEVC \cite{bossen2012hevc}, and WebP \cite{10605825}.  
All baselines are tuned to match our bitrate range for fair comparison.

Fig. \ref{fig:CM} shows localization accuracy as the UAV–RSU link budget (in KB/s) is varied. Each scheme’s parameter set (e.g.\ latent dimension for VIB; quality factor for JPEG/H.264/H.265/WebP) is swept to identify Pareto‐optimal points—minimizing error per KB/s. Features are encoded onboard, sent over IEEE 802.11, decoded at the RSU, and matched to a geo-tagged database; the nearest neighbor index yields the pose estimate. When network throughputs are above 50 KB/s, all methods converge to a mean error of 10 m. When the bottleneck falls below 10 KB/s, O-VIB degrades most gracefully: at 8 KB/s it still achieves less than 10 m error, representing a 42.1\% reduction versus vanilla VIB and a 62.6\% reduction versus WebP. Embedding orthogonality thus prunes redundant latent dimensions while preserving task-critical information, making O-VIB far more robust under severe bandwidth constraints.

%

\begin{figure}[t]
  \centering
  \includegraphics[width=0.48\textwidth]{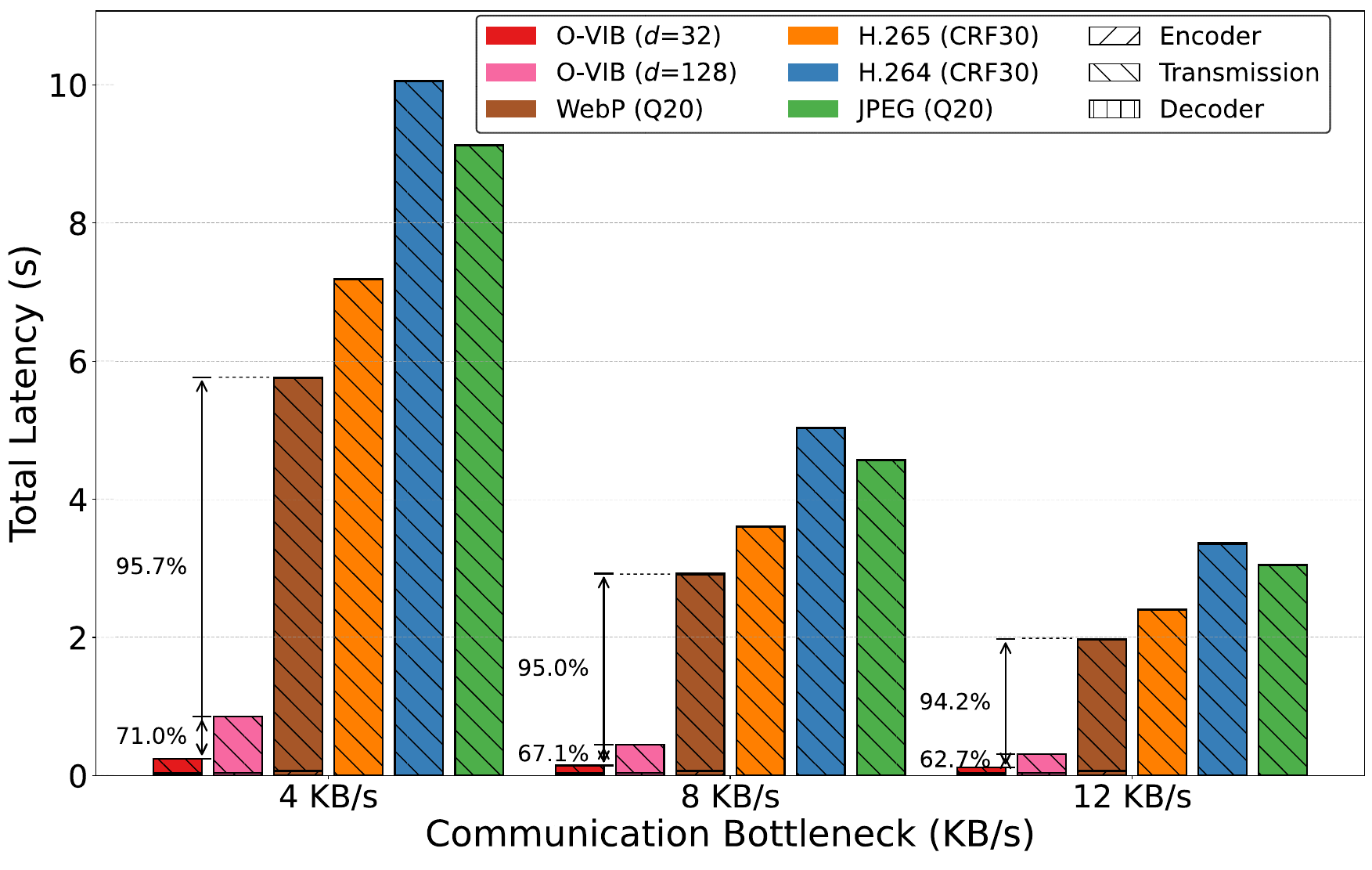}
  \caption{\color{black}{{Communication bottleneck vs latency.}}}
  \label{fig:Communication bottleneck vs latency}
\end{figure}

Fig.~\ref{fig:Communication bottleneck vs latency} shows end-to-end latency (encoding, transmission, decoding) under 4, 8, and 12~KB/s bottlenecks (poor channel conditions). At 4~KB/s, O-VIB achieves $0.24$~s ($d=32$) and $0.85$~s ($d=128$), while WebP, H.265, H.264, and JPEG incur $5.7$~s, $7.1$~s, $10.9$~s, and $9.1$~s. Compared to WebP, O-VIB reduces latency by $95.7\%$. At 8~KB/s, O-VIB drops to $0.15$~s ($d=32$) and $0.44$~s ($d=128$), while WebP remains at $2.9$~s, achieving a $95.0\%$ reduction. At 12~KB/s, O-VIB further lowers latency to $114.0$~ms and $0.31$~s, compared to WebP's $1.9$~s, realizing a $94.2\%$ reduction. These results confirm that O-VIB maintains sub-second latency and achieves over an order of magnitude improvement under stringent bottlenecks.

\begin{figure}[t]
  \centering
  \subfigure[Localization Error vs $\beta$.]{
    \includegraphics[width=0.22\textwidth]{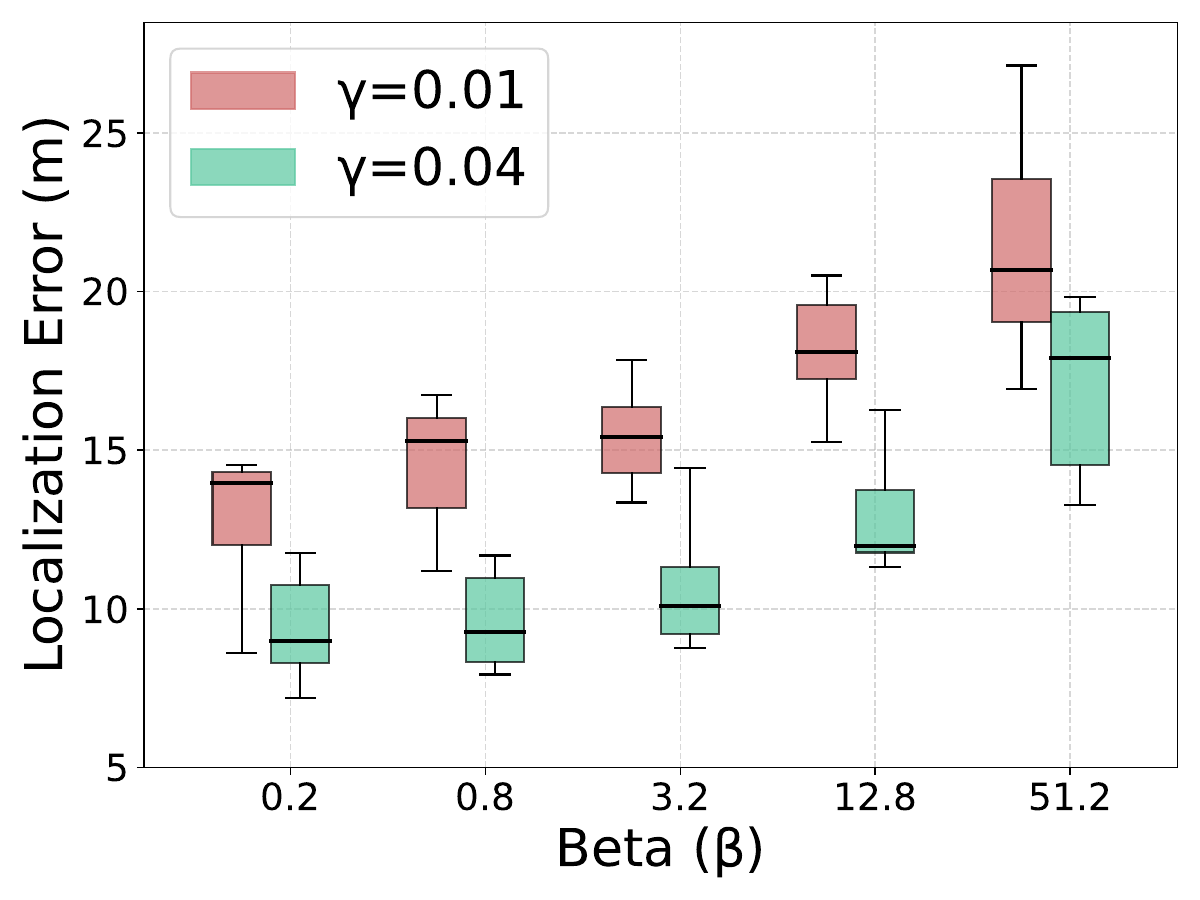}
  }
  \subfigure[{\color{black}{Latent Entropy vs $\beta$.}}]{
    \includegraphics[width=0.22\textwidth]{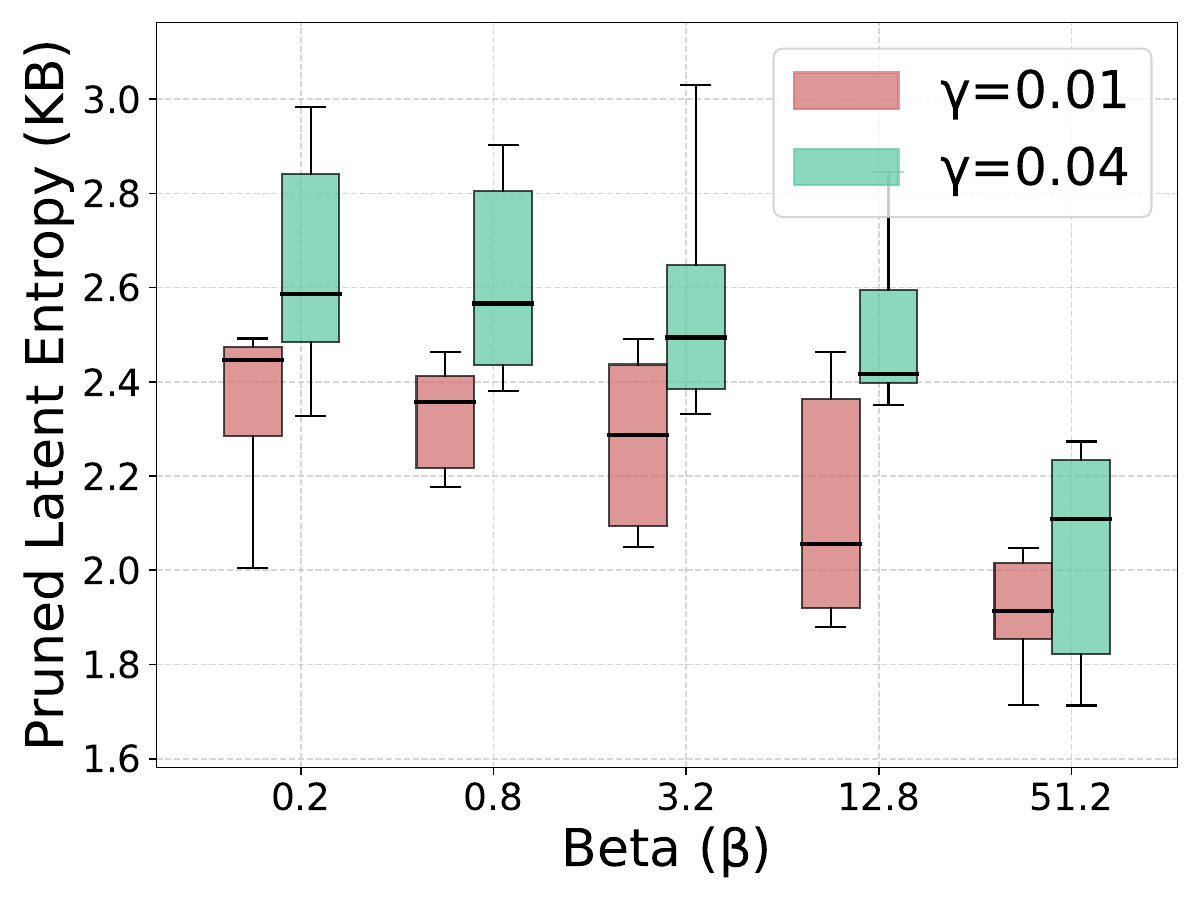}
  }
  \caption{\color{black}{{Localization error and latent entropy vs $\beta$.}}}
  \label{fig:loc-entropy}
  \vspace{-3mm}
\end{figure}

Fig. \ref{fig:loc-entropy} explores how the information‐bottleneck weight $\beta$ shapes the trade-off between compression and localization under two orthogonality strengths ($\gamma = 0.01, 0.04$). In Fig. \ref{fig:loc-entropy}(a), increasing $\beta$ steadily reduces pruned latent entropy, confirming that the ARD term drives superfluous dimensions toward zero variance. At the same time, localization error rises from about 12 m to over 20 m for $\gamma = 0.01$ and from  9 m to 19 m for $\gamma$ = 0.04, demonstrating the expected accuracy penalty of tighter compression. The $\gamma$ = 0.04 curves consistently lie below $\gamma = 0.01$ in Fig. \ref{fig:loc-entropy}(a) while exhibiting higher entropy in Fig. \ref{fig:loc-entropy}(b), validating that stronger orthogonality preserves task-critical information and yields better localization at equivalent rates.

\section{Conclusion}
In this paper, we have proposed a task‐oriented communication framework for visual navigation with edge–aerial collaboration for low altitude economy. Our contributions are twofold. First, we have developed a multi‐camera variational information bottleneck encoder augmented with an orthogonality constraint, which extracts ultra‐compact, task‐relevant features from five camera views. Second, we have deployed and evaluated the complete system on both a new CARLA‐derived dataset and a physical Jetson Orin NX/Raspberry Pi testbed, quantifying localization accuracy, latent‐entropy trade‐offs, and end‐to‐end latency. Extensive experiments have demonstrated that O-VIB maintains sub-10 m localization error at throughputs below 10 KB/s—reducing error by 42.1 \% versus vanilla VIB and 62.6 \% versus WebP—and achieves over three orders‐of‐magnitude lower latency than JPEG, H.264 and H.265. We will release our dataset and code to accelerate future research in task-oriented aerial communications.

\section{Acknowledgement}
This work of Y. Fang was supported in part by the Hong Kong SAR Government under the Global STEM Professorship and Research Talent Hub, the Hong Kong Jockey Club under the Hong Kong JC STEM Lab of Smart City (Ref.: 2023-0108). The work of S. Hu was supported in part by the Hong Kong Innovation and Technology Commission under InnoHK Project CIMDA. The work of Y. Deng was supported in part by the National Natural Science Foundation of China under Grant No. 62301300. 


%


\bibliographystyle{./IEEEtran}
\bibliography{IEEEabrv,ref}

\end{document}